\title{Fast Decomposition of Temporal Logic Specifications for Heterogeneous Teams\thanks{\small{DISTRIBUTION STATEMENT A. Approved for public release. Distribution is unlimited.
This material is based upon work supported by the Under Secretary of Defense for Research and Engineering under Air Force Contract No. FA8702-15-D-0001. Any opinions, findings, conclusions or recommendations expressed in this material are those of the author(s) and do not necessarily reflect the views of the Under Secretary of Defense for Research and Engineering.}}}
\author{
Kevin Leahy,\textsuperscript{\rm 1} 
Austin Jones,\textsuperscript{\rm 1}
Cristian Ioan Vasile,\textsuperscript{\rm 2}\\
}
\theoremstyle{remark}
\newtheorem{dfn}{Definition}
\newtheorem{prop}{Proposition}
\newtheorem{example}{Example}
\newtheorem{problem}{Problem}
\newtheorem{remark}{Remark}
\newcommand \until[1]{\mathcal{U}_{[#1)}}
\newcommand \even[1]{\diamondsuit_{[#1)}}
\newcommand \alw[1]{\square_{[#1)}}
\tikzset{%
  >={Latex[width=2mm,length=2mm]},
            base/.style = {rectangle, rounded corners, draw=black,
                           minimum width=4cm, minimum height=1cm,
                           text centered, font=\sffamily},
  activityStarts/.style = {base, fill=blue!30},
       startstop/.style = {base, fill=red!30},
    activityRuns/.style = {base, fill=green!30},
         process/.style = {base, minimum width=2.5cm, fill=orange!15,
                           font=\ttfamily},
}
\begin{document}

\maketitle

\begin{abstract}
In this work, we focus on decomposing large multi-agent
path planning problems with global temporal logic goals (common to all agents) into
smaller sub-problems that can be solved and executed
independently.
Crucially, the sub-problems' solutions must jointly satisfy the common global mission specification.
The agents' missions are given as Capability Temporal Logic (CaTL) formulas,
a fragment of signal temporal logic, that can express properties over tasks involving multiple agent capabilities (sensors, e.g., camera, IR, and effectors, e.g., wheeled, flying, manipulators) under strict timing constraints.
The approach we take is to decompose both the temporal logic specification and the team of agents.
We jointly reason about the assignment of agents to subteams and the decomposition of formulas using a satisfiability modulo theories (SMT) approach.
The output of the SMT is then distributed to subteams and
leads to a significant speed up in planning time.
We include computational results to evaluate the efficiency of our solution, as well as the trade-offs introduced by the conservative nature of the SMT encoding.

\end{abstract}

\section{Introduction}
\label{sec:intro}

The problem of planning for a large team of heterogeneous agents from a high-level specification remains difficult. An attractive approach for ameliorating the difficulty of such problems is to decompose the problem into sub-problems that can be solved and executed in parallel. This approach carries trade-offs. Specifically, there is generally a large up-front computational cost to determine a feasible decomposition of the task and team, in exchange for faster planning and execution. In this work, we introduce a system for quickly decomposing temporal logic specifications to allow large teams of heterogeneous agents to plan in near real-time.

Recently there has been strong interest in planning for teams of agents from temporal logic specifications.
Works in this area include coordination of very large groups of homogeneous agents as a swarm~\cite{haghighi2016,chen2018} or using sampling~\cite{kantaros2020}. The current work is designed to enable planning for very large groups of heterogeneous agents. Other works that consider heterogeneous agents feature agent-specific specifications~\cite{GuoTaskMotion2017,GuoMultiAgent2015}, whereas we are concerned strictly with global specifications.
Some related works consider teams of heterogeneous agents whose missions feature time-abstract semantics~\cite{sahin2019,schillinger2018}. 
Our goal is to design a system that can work with teams of heterogeneous agents with concrete timing requirements.

This work is most directly related to~\citet{Jones2019ScRATCHS}. That work introduced Capability Temporal Logic (CaTL), a fragment of signal temporal logic. CaTL is designed for tasking large teams of heterogeneous agents, each with varying capability to service requests. A centralized mixed integer linear program (MILP) is used to generate a plan for the entire team simultaneously from a given CaTL specification. Here, we seek to improve the computational time of the solution in~\citet{Jones2019ScRATCHS} by decomposing the specification and team of agents into sub-specifications and subteams. Such a decomposition generates several smaller MILPs that can be solved in parallel, rather than one large MILP. The decomposition also helps with decentralized execution of the specification, potentially reducing communication and computational burden during mission execution.

Another closely related work is~\citet{chen2012}. In that approach, each agent had an associated set of capabilities that it could service. The mission specification was projected onto a language for each agent that captured the requests it could service. The product of the language of the entire set of agents was then checked for trace-closedness, to determine if the mission could be decomposed. In our work, we are interested in large teams, so a product language of projections is not computationally tractable. Rather than find the complete set of solutions, we focus on finding a feasible agent-to-task pairing quickly. Likewise,~\citet{schillinger2018decomposition} is focused on decomposition of time-abstract specifications using product a automaton for each agent and designing a compact team automaton.
Such an approach would not be tractable with the concrete-time specifications we consider in this work.
Other work that is concerned with decomposition of temporal logic specifications includes~\citet{banks2020}. That work uses an cross-entropy optimization approach for task allocation for a team of homogeneous agents. We are concerned in this work with task allocation for a heterogeneous team.

In this work, we define conditions for parallelizing a CaTL specification. We also provide encodings for solving them as a satisfiability modulo theories (SMT) problem~\cite{barrett2018}. The specification and agent assignment are encoded in an SMT problem, and a set of sub-specifications and subteams is returned. Each sub-specification/subteam pair can then be solved as a MILP using the methods presented in~\citet{Jones2019ScRATCHS}.

\section{Models and Specifications}
\label{sec:def}

In this section, we introduce the models for the environment and agents, and the specification language, Capability Temporal Logic (CaTL)~\cite{Jones2019ScRATCHS}, for describing behaviors of these systems.

\paragraph{Environment}
We consider a team of agents operating in an environment consisting of a finite set of discrete locations (states) $Q$ and weighted edges $\mathcal{E}$ between states, where the weights represent positive integer travel times.
The states are labeled with atomic propositions from a set $AP$.
We denote the labeling function by $L: Q \to 2^{AP}$.

\paragraph{Agents}
Each agent has a set of capabilities it can execute.
We denote the finite set of all agent capabilities by $Cap$, and the set of all agents by $J$.

\begin{dfn}
An \emph{Agent} $j\in J$ is given by a tuple $A_j = (q_{0,j},Cap_j)$, where $q_{0,j} \in Q$ is the initial state of the agent and $Cap_j \subseteq Cap$ is its set of capabilities.
\end{dfn}
\begin{dfn}
The  motion of agent $A_j$ in the environment induces a \emph{trajectory}, denoted $s_j:\mathbb{Z}_{\geq 0} \to Q \cup \mathcal{E}$, such that $s_j(0) = q_{0,j}$ and $s_j(t)$ returns the state or edge occupied by agent $A_j$ at time $t \in \mathbb{Z}_{\geq 1}$.
\end{dfn}

We denote the number of agents with capability $c \in Cap$ at state $q \in Q$ and time $t \in \mathbb{Z}_{\geq 0}$ by $n_{q, c}(t) = |\{j\in J \mid q=s_j(t), c\in Cap_j\}|$.
A partition $\{J_\ell\}_{\ell\in \mathcal{J}}$ of the agent set $J$ is called a \emph{team partition}.

\begin{dfn}
The \emph{synchronous trajectory} $s_{J'}$ obtained from a set of agent trajectories $\{s_j\}_{j \in J'}$ with $J' \subseteq J$ is given by
$s_{J'}  = \bigcup_{j \in J'} s_{j}$.
\end{dfn}


\paragraph{Capability Temporal Logic}

The team of agents is tasked with a high-level specification given as a CaTL formula.
Here, we define the syntax and semantics of CaTL. 
The atomic unit of a CaTL formula is a \emph{task}.


\begin{dfn}
A \emph{task} is a tuple $T = (d,\pi, cp)$,
where $d \in \mathbb{Z}$ is a duration of time,
$\pi\in AP$ is an atomic proposition,
$cp: Cap \to \mathbb{Z}_{\geq 1} \cup \{-\infty\}$ is a counting map specifying how many agents with each capability should be in each region labeled $\pi$.
A capability $c$ that is not required to perform task $T$ is defined by $cp(c)=-\infty$.
We abuse notation, and denote the set of required capabilities for a task $T$ by $cp_T \neq \emptyset$.
\end{dfn}


CaTL is a fragment of STL~\cite{maler2004}, where the core units are tasks rather than arbitrary predicates.


\begin{dfn}
The \emph{syntax} of CaTL~\cite{Jones2019ScRATCHS} is
\begin{equation*}
    \phi ::=  T \mid \phi_1 \land \phi_2 \mid \phi_1 \lor \phi_2 \mid \phi_1 \until{a,b} \phi_2 \mid \even{a,b} \phi \mid \alw{a,b} \phi
\end{equation*}
where $\phi$ is a CaTL formula, $T$ is a task, $\land$ and $\lor$ are the Boolean conjunction and disjunction operators, $\until{a,b}$, $\even{a,b}$, and $\alw{a,b}$ are the time-bounded until, eventually, and always operators, respectively.
\end{dfn}


\begin{dfn}
The \emph{qualitative semantics} of CaTL are defined over synchronous trajectories $s_{J}$. At time $t$,
\begin{multline*}
    (s_{J},t) \models T \Leftrightarrow  \forall \tau \in [t,t+d), \forall q \in L^{-1}(\pi), \forall c \in Cap\\
    n_{q, c}(\tau) \geq cp(c),
\end{multline*}
while the remaining semantics are defined as for STL~\cite{maler2004}.
A team trajectory satisfies a CaTL formula $\phi$, denoted $s_{J} \models \phi$, if $(s_{J},0) \models \phi$.
\end{dfn}

\begin{dfn}[Quantitative Semantics (Availability Robustness)]
\label{def:quantSem}
The  availability robustness of a task is computed as
\begin{equation}
\label{eq:rho_rules}
\rho_{a}(s_{J},t,T) = \underset{c\in Cap}{\min}\,  \underset{t' \in [t,t+d)}{\min}  \underset{q \in L^{-1}(\pi)}{\min}  n_{q,c}(t') - cp(c)
\end{equation}
while for the other operators it is computed recursively as for STL~\cite{maler2004}.
\end{dfn}



\section{Problem Statement}

Denote by $\mathrm{Synth}(J', \phi)$ a method that returns trajectories $s_j$ for all $j \in J' \subseteq J$ such that the team trajectory $s_{J'}$ maximizes the availability robustness $\rho_a(s_{J'}, 0, \phi)$ with respect to $\phi$.
We are now ready to formally state the decomposition problem.

\begin{problem}
\label{prob:decomposition}
Given a set of agents $\{A_j\}_{j \in J}$ and a CaTL formula $\phi$, find a team partition $R$, and a set of formulas $\{\phi_r\}_{r \in R}$, if the synthesis problem is feasible. Formally,
\begin{equation}
\begin{array}{lcl}
s^*_J \models \phi & \Rightarrow &
\left( s_{J_r} \models \phi_r, \ \forall r \in R \right) \land \left(  s_J \models \phi \right)
\end{array},
\end{equation}
where $s^*_J = \bigcup_{j\in J} s^*_j$,
$\{s^*_j\}_{j\in J} = \mathrm{Synth}(J, \phi)$,
$s_J = \bigcup_{r\in R} s_{J_r}$,
and $\{s_j\}_{j \in J_r} = \mathrm{Synth}(J_r, \phi_r)$ for all $r \in R$.
\end{problem}




To solve Problem~\ref{prob:decomposition}, we use syntax trees corresponding to CaTL formulas (Sec.~\ref{sec:ast}). We propose sufficient conditions to decompose formulas (Sec.~\ref{sec:sufCond}), and encode these conditions as constraints in an SMT problem (Sec.~\ref{sec:sat}).
The SMT solution is an assignment of agents to tasks. This assignment is used to decompose the specification into a set of sub-specifications, each with a corresponding subteam.
The resulting decomposed specifications and teams can each be handled concurrently using $\mathrm{Synth}$.
There are many choices for the synthesis method $\mathrm{Synth}$~\cite{kress2018,belta2017}.
We employ an MILP approach similar to~\cite{Jones2019ScRATCHS}.
The decomposition process is outlined in Algorithm~\ref{alg:overview}.

\begin{algorithm}
\KwIn{Agents $\lbrace A_j\rbrace_{j\in J}$, CaTL formula $\phi$}
\KwOut{Team partition $\{J_r\}_{r \in R}$, Formulas $\{\phi_r\}_{r \in R}$}
\DontPrintSemicolon
$SMT\gets \mathrm{EncodeSMT}(\lbrace A_j\rbrace_{j\in J},\phi)$\;
Assignment $\alpha \gets \mathrm{Solve}(SMT)$\;
$\{\phi_r\}_{r\in R},\:\{J_r\}_{r \in R}\gets \mathrm{DecomposeTree}(\phi,\alpha)$\;
\Return{$\{J_r\}_{r \in R}$, $\{\phi_r\}_{r \in R}$}
\caption{Solution overview.}
\label{alg:overview}
\end{algorithm}

\section{Syntax Trees and Agent Assignments}
\label{sec:ast}

We use syntax trees to reason about CaTL specifications, their decomposition, and the assignment of agents to tasks.

\begin{dfn}
A \emph{CaTL syntax tree} a CaTL formula $\phi$ is a tuple
$\mathcal{T}_{\phi} = (V, v_0, Par)$,
where $V = \{\wedge, \vee, \until{a,b}, \even{a,b}, \alw{a,b}, T=(d, \pi, cp)\}$ is the set of nodes associated the operators and tasks of $\phi$,
$v_0$ is a root node,
$Par: V \to (V \setminus \{v_0\}) \cup \{{\bowtie}\}$ is the bijective parent map that defines the tree structure, and $Par(v_0) = {\bowtie}$ denotes that the root has no parent.
\end{dfn}

Let $Ch(v)$ denote the set of all children of a node $v \in V$,
$\phi(v)$ the subformula of $\phi$ associated with node $v$,
and $\Lambda$ the set of leaf nodes, i.e., nodes without children $Ch(v) = \emptyset$.
The leaf nodes correspond to the tasks of CaTL formulas.




\begin{example}

The syntax tree for the formula 
\begin{equation}
\label{examSpec}
\psi =   (T_1 \lor T_2) \land ( (T_3)\, \until{a,b} (T_4 \land T_5)))
\end{equation}
is given in Fig.~\ref{absSyntTree}

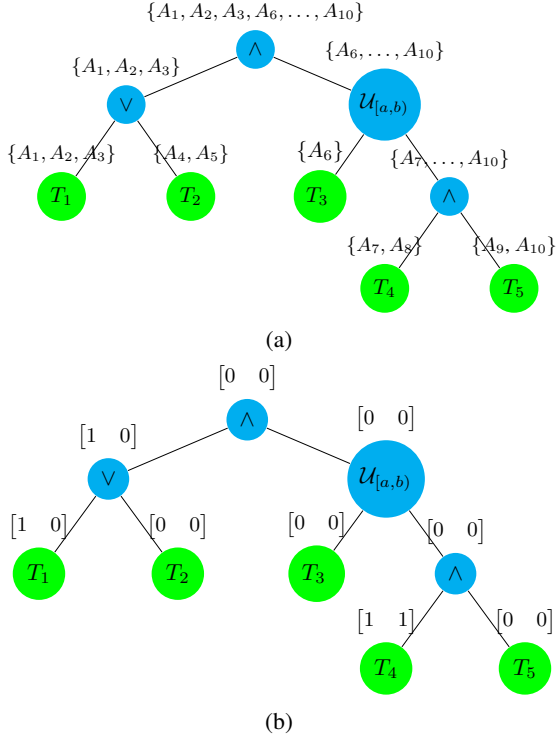
\begin{figure}
\begin{subfigure}[b]{0.9\columnwidth}
\centering 
\resizebox{\linewidth}{!}{
\begin{tikzpicture}[scale=.6,level 1/.style = {sibling distance = 20em,
    draw, align=center},
    level 2/.style={sibling distance=10em, 
    draw, align=center, level distance = 2.5cm},
     level 3/.style = {sibling distance = 10em,  
    draw, align=center, level distance = 2.5cm},
    level 4/.style={sibling distance=10em,
    draw, align=center}]
    \node [shape=circle, fill=cyan,label={\small$\{A_1,A_2,A_3,A_6,\ldots, A_{10}\}$}]{$\wedge$}
        child { node [shape=circle, fill=cyan,label={\small$\{A_1,A_2,A_3\}$}] {$\vee$} 
            child { node [shape=circle, fill=green,label={\small$\{A_1,A_2,A_3\}$}] {$T_1$} }
            child { node [shape=circle, fill=green,label={\small$\{A_4,A_5\}$}] {$T_2$} } }
        child { node [shape=circle, fill=cyan,label={\small$\{A_6,\ldots,A_{10}\}$}] {$\until{a,b}$}
            child { node [shape=circle, fill=green,label={$\small\{A_6\}$}] {$T_3$ } } 
            child { node [shape=circle, fill=cyan,label={\small$\{A_7,\ldots,A_{10}\}$}]  {$\wedge$} 
                child { node [shape=circle, fill=green,label={\small$\{A_7,A_8\}$}] {$T_4$}}
                child { node [shape=circle,fill=green,label={\small$\{A_9$, $A_{10}\}$}] {$T_5$}}}};
              
\end{tikzpicture}}
\caption{\label{absSyntTree}}
\end{subfigure}
\begin{subfigure}[b]{0.9\columnwidth}
\centering 
\resizebox{\linewidth}{!}{
\begin{tikzpicture}[scale=.6,level 1/.style = {sibling distance = 20em, 
    draw, align=center},
    level 2/.style={sibling distance=10em, 
    draw, align=center, level distance = 2.4cm},
     level 3/.style = {sibling distance = 10em,  
    draw, align=center, level distance = 2.4cm},
    level 4/.style={sibling distance=10em,
    draw, align=center, level distance = 2.4cm}]
    \node [shape=circle, fill=cyan,label={\small$\begin{bmatrix} 0 & 0\end{bmatrix}$}]{$\wedge$}
        child { node [shape=circle, fill=cyan,label={\small$\begin{bmatrix} 1 & 0\end{bmatrix}$}] {$\vee$} 
            child { node [shape=circle, fill=green,label={\small$\begin{bmatrix} 1 & 0\end{bmatrix}$}] {$T_1$} }
            child { node [shape=circle, fill=green,label={\small$\begin{bmatrix} 0 & 0\end{bmatrix}$}] {$T_2$} } }
        child { node [shape=circle, fill=cyan,label={\small$\begin{bmatrix} 0 & 0\end{bmatrix}$}] {$\until{a,b}$}
            child { node [shape=circle, fill=green,label={\small$\begin{bmatrix} 0 & 0\end{bmatrix}$}] {$T_3$ } } 
            child { node [shape=circle, fill=cyan,label={\small$\begin{bmatrix} 0 & 0\end{bmatrix}$}]  {$\wedge$} 
                child { node [shape=circle, fill=green,label={\small$\begin{bmatrix} 1 & 1\end{bmatrix}$}] {$T_4$}}
                child { node [shape=circle,fill=green,label={\small$\begin{bmatrix} 0 & 0\end{bmatrix}$}] {$T_5$}}}};
              
\end{tikzpicture}}
\caption{\label{exCap} }
\end{subfigure}
\caption{Abstract syntax tree for~\eqref{examSpec} with assignment~(\ref{absSyntTree}) and capability excess~(\ref{exCap})}
\end{figure}
\end{example}

\begin{dfn}
An \emph{assignment} of agents $\{A_j\}_{j \in J}$ in a CaTL syntax tree $\mathcal{T}_\phi$ is a mapping $\alpha: V \to 2^J$.
\end{dfn}

An assignment $\alpha$ keeps track of agents assigned to a tasks and subformulas.
Assignments are consistent with the formula structure, such that agents assigned to an intermediate node $v$ must be assigned to some child node of $v$.
Thus, $\alpha$ is completely determined by the assignment over the leaves $\Lambda$
\begin{equation}
    \alpha(v) = \bigcup_{v' \in Ch(v)} \alpha(v')\;. \\
\end{equation}

We further define the notion of \emph{capability excess} to aid in evaluating the assignment of agents to tasks.
\begin{dfn}
 The \emph{capability excess} of an assignment $\alpha$ to a node $v \in V$ is defined recursively as
\begin{equation}
\label{eq:ceRec}
ce(\alpha,v) = \begin{cases}
[na_{c} - cp(c)]_{c \in Cap} & v = T\\
ce(\alpha,\beta(Ch(v))) & v = \lor\\
[ \underset{v' \in Ch(v)}\min ce(\alpha,v')_c]_{c \in Cap} & \text{otherwise}
\end{cases}
\end{equation}
where
$na_{c_i} = |\{j\in \alpha(v) \mid c \in Cap_j\}|$ is the number of agents with capability $c_i$ assigned to node $v$, and
$\beta:2^V \to V$ is a deterministic selection map between the children of disjunction nodes.
\end{dfn}
In Sec.~\ref{sec:sat}, a specific instantiation of $\beta$ is denoted as $Choose$.



\begin{example}
Consider tasks involving two capabilities $Cap = \{c_1, c_2\}$.
The capabilities assigned to each agent, and the ones required by each task are listed in Fig.~\ref{expTab}.
An assignment of agents to tasks and the resulting capability excess are illustrated in Fig.~\ref{absSyntTree} and~\ref{exCap}, respectively.
\end{example}


\begin{figure}
  \centering
  \resizebox{0.95\columnwidth}{!}{
    \begin{tabular}{|c|c|c|}
    \hline
    Agent & $c_1$ & $c_2$ \\ \hline
        $A_1$ & 1 & 1  \\ \hline
        $A_2$ & 1 & 0 \\ \hline
        $A_3$ & 1 & 0\\ \hline
        $A_4$ & 1 & 1\\ \hline
        $A_5$ & 1 & 1\\ \hline
    \end{tabular}
    \begin{tabular}{|c|c|c|}
    \hline
    Agent & $c_1$ & $c_2$ \\ \hline
        $A_6$ & 0 & 1 \\ \hline
         $A_7$ & 1 & 1 \\ \hline
        $A_8$  & 1 & 1\\ \hline
        $A_9$ & 1 & 0 \\ \hline
         $A_{10}$ & 0 & 1 \\ \hline         
    \end{tabular}
    \quad
    \begin{tabular}{|c|c|c|}
    \hline
    Task & $cp(c_1)$ & $cp(c_2)$ \\ \hline
        $T_1$ & 2  & $-\infty$   \\  \hline
        $T_2$ & 2 & 2  \\ \hline
        $T_3$ & $-\infty$ & 1 \\ \hline
        $T_4$ & 1 & 1 \\ \hline
        $T_5$ &  1 &  1 \\ \hline
    \end{tabular}}
    \caption{\label{expTab} Numbers and requirements of capabilities for~\eqref{examSpec}.}
    \end{figure}

\begin{dfn}\label{def:elg}
 An assignment $\alpha$ is \emph{eligible} for the CaTL syntax tree $\mathcal{T}_\phi$, denoted $\alpha \models_e \mathcal{T}_{\phi}$, if 
 \begin{equation}
 \underset{c \in Cap}{\min} ce(\alpha,v_0)_c \geq 0
 \end{equation}
\end{dfn}

\begin{prop}
For a given team trajectory  $s_J$, let $\alpha[s_J]$ denote the induced assignment such that $A_j \in \alpha(\lambda)$ if $A_j$ participates in task $\lambda \in \Lambda$.  
It holds that
\begin{equation}
s_J \models \phi \Rightarrow \alpha[s_J] \models_e \mathcal{T}_\phi
\end{equation}
That is, eligibility is a necessary condition for satisfiability.
\end{prop}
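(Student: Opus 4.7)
The natural approach is structural induction on the syntax tree $\mathcal{T}_\phi$, strengthening the statement so that the hypothesis can propagate through the recursion. Concretely, I would prove the following claim: if $(s_J, t) \models \phi(v)$ for some appropriate time $t$ within the satisfaction window of $v$, then $ce(\alpha[s_J], v)_c \geq 0$ for every $c \in Cap$. Applied at the root $v_0$ with $t = 0$, this immediately yields $\alpha[s_J] \models_e \mathcal{T}_\phi$ and hence the proposition.

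Before starting the induction, I would pin down the definition of participation used by $\alpha[s_J]$. For a leaf $\lambda = (d, \pi, cp)$ that is required to hold over some interval $[t, t+d)$, the qualitative semantics guarantee that each $q \in L^{-1}(\pi)$ is populated by at least $cp(c)$ agents with capability $c$ at every $\tau \in [t, t+d)$. Taking $\alpha[s_J](\lambda)$ to be the set of such agents whose trajectories actually deliver those capability counts is the base case of the induction: for that $\lambda$ we immediately get $na_c \geq cp(c)$, so $ce(\alpha[s_J], \lambda)_c \geq 0$ by the first branch of \eqref{eq:ceRec}.

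For the inductive step, I would split on the operator at $v$. For $\wedge$ and for the temporal operators $\until{a,b}$, $\even{a,b}$, $\alw{a,b}$, satisfaction of $\phi(v)$ forces satisfaction of each subformula $\phi(v')$ at some time (or throughout an interval) inside the satisfaction window of $v$, so the inductive hypothesis yields $ce(\alpha[s_J], v')_c \geq 0$ for every child $v'$ and every $c$. Since \eqref{eq:ceRec} defines $ce$ at such $v$ as the componentwise minimum across children, the desired non-negativity is preserved. For $\vee$ only one disjunct must hold; I would invoke the fact that $\beta$ is a free choice map (instantiated in Sec.~\ref{sec:sat} as $Choose$) and select $\beta(Ch(v))$ to be a child $v'$ with $(s_J, t) \models \phi(v')$. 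The inductive hypothesis at that child then transfers directly to $v$ via the second branch of \eqref{eq:ceRec}.

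The main obstacle is bookkeeping rather than any deep argument: one must verify that the induced assignment $\alpha[s_J]$ is well-defined and consistent when a single agent can, in principle, service different leaves at different times, so that the ``agents participating in $\lambda$'' collected at each leaf suffice to witness $ce \geq 0$ simultaneously at every ancestor. Since the capability excess only counts agents statically (by their capability multisets) and propagates via minima and unions over children, any consistent choice of participants at the leaves — in particular the one picked out by $s_J$ together with the satisfied disjunct at each $\vee$-node — carries the inequality up the tree without conflict. This yields $\min_c ce(\alpha[s_J], v_0)_c \geq 0$, i.e., eligibility.
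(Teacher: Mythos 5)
Your proof is correct in substance, but it takes a genuinely different route from the paper's. The paper introduces an idealized trajectory $s_{0,J}$ in which every agent is assumed able to service every task simultaneously, proves by induction the quantitative bound $ce(\alpha[s_{0,J}],v) \geq \rho_a(s_{0,J},t,\phi(v))$ (capability excess upper-bounds best-case availability robustness), and then chains $ce(\alpha[s_{0,J}],v_0) \geq \rho_a(s_{0,J},0,\phi) \geq \rho_a(s_J,0,\phi) \geq 0$. You instead run a direct structural induction on the qualitative semantics of the \emph{actual} trajectory: $(s_J,t)\models\phi(v)$ implies $ce(\alpha[s_J],v)_c \geq 0$ for all $c$. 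Your base case correctly exploits that the task semantics require $n_{q,c}(\tau)\geq cp(c)$ in every $\pi$-labeled region over the whole interval, so the participating agents alone already give $na_c \geq cp(c)$. Your version has two advantages: it concludes eligibility of $\alpha[s_J]$ itself, whereas the paper's chain terminates at $ce(\alpha[s_{0,J}],\cdot)$ and leaves a small mismatch with the proposition as stated; and it avoids the informal claim that the unconstrained trajectory maximizes robustness. What the paper's argument buys is the stronger quantitative fact that capability excess dominates achievable robustness, which is what motivates using $ce$ as a surrogate in the SMT encoding. Two caveats apply to both arguments and are worth making explicit in yours: eligibility via \eqref{eq:ceRec} depends on the selection map $\beta$ at disjunction nodes, so your claim is properly ``there exists a choice of $\beta$ (the satisfied disjunct) under which $\alpha[s_J]$ is eligible''; and for $\until{a,b}$, if the witness time for $\phi_2$ coincides with the start of the window, $\phi_1$ may be vacuously satisfied and the inductive hypothesis gives nothing for that child, an edge case the paper's proof also glosses over.
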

\begin{proof}
The robustness $\rho_a$ would reach its theoretical maximum if 
all agents were able to simultaneously service all tasks.
Denote the team trajectory resulting from these conditions as $s_{0,J}$.
We now prove by induction that $ce(\alpha[s_{0,J}], v) \geq \rho_a(s_{0,J},t,\phi(v))$.

\textbf{Base Case.} As $\rho_a(s_{0,J},t,T)$ is defined as a minimum over capabilities and over regions, it would reach its maximum if all of the agents with required capabilities $cp_T$ were equally divided among the required regions 
at the appropriate time.
This maximum value is given by $ce(\alpha[s_{0,J}],T)$, i.e., $ce(\alpha[s_{0,J}],T) \geq  \rho_a(s_{0,J},t,T), \forall t$.  

\textbf{Recursion.} For $\wedge$ and $\vee$, we apply the same recursive relationships to $ce$ as we do to $\rho_a$.
Thus, if $ce(\alpha[s_{0,J}],v')$, $v' \in Ch(v)$, are upper bounds,
then $ce(\alpha[s_{0,J}], v) \geq \rho_a(s_{0,J},t, \phi(v)=\bigotimes_{v'\in Ch(v)} \phi(v')), \forall t , \bigotimes \in \{\land,\lor\}$.

For the temporal operators, since $s_{0,J}$ considers the case when agents are not motion or time-constrained, we can ignore the maximization and minimization with respect to temporal arguments in the recursive semantics.
This yields the form of the recursive relations in~\eqref{eq:ceRec}.
Therefore, if $ce(\alpha[s_{0,J}],v')$, $v' \in Ch(v)$, are upper bounds, then $ce(\alpha[s_{0,J}], v) > \rho_a(s_{0,J},t,\phi(v))$ $\forall t$,
where $\phi(v)$  is $\phi_1 \until{a,b} \phi_2$, $\even{a,b}\phi_1$, or $\alw{a,b} \phi_1$.

Thus, $ce(\alpha[s_{0,J}],v_0) \geq \rho_a(s_{0,J},0,\phi(v_0))$, $\forall \phi \in CaTL$.
Further, $s_J\models\phi \Rightarrow \rho_a(s_{J},0,\phi) \geq 0 \Rightarrow ce(\alpha[s_{0,J}],\phi)\geq 0$, thus proving the proposition.
\end{proof}

\section{Syntax Tree Transformations}
\label{sec:sufCond}

In this section, we list some results on syntax trees and assignments that can be applied to find a distribution of formulas and team partition whose parallel satisfaction implies satisfaction of the original formula.

\subsection{Graph operators for decomposition}
We begin by defining decomposition operators for CaTL syntax trees. We focus on three types of operations: \emph{pruning}, or removing a section of the syntax tree; \emph{substitution} of a portion of the tree with a new set of nodes; and \emph{parallelization}, or splitting the tree into multiple trees that can be executed in parallel. Satisfaction of the CaTL formula of the modified tree will imply satisfaction of the original CaTL formula.

\begin{dfn}[Transformation]
Let $\tau$ be a subtree of $\mathcal{T}$.
A transformation rule $\tau \leadsto \{\tau'_1,\ldots, \tau'_N\}$ produces $N$ copies of the tree $\mathcal{T}$ with the subtree $\tau$ replaced by $\tau'_k$, respectively.
Transformations can not introduce new tasks, i.e., $\Lambda'_k \subseteq \Lambda$,
where $\Lambda$ and $\Lambda'_k$ are the leaves of $\mathcal{T}$ and the transformed trees $\mathcal{T}'_k$, respectively. 
An assignment $\alpha$ on $\mathcal{T}$ induces assignments $\alpha'_k$ on the transformed trees $\mathcal{T}'_k$ such that $\alpha'_k(\lambda) = \alpha(\lambda)$ for all $\lambda \in \Lambda'_k$. 
\end{dfn}

In the following, we denote by $\langle v \mid \tau_1, \ldots, \tau_N \rangle$ a syntax tree with root $v$ and subtrees $\tau_1, \dots, \tau_N$.
Given a node $v$ of a tree $\mathcal{T}$, we denote the subtree rooted at $v$ as $\mathcal{T}[v]$.

Let $\mathcal{T}$ be a syntax tree, $v$ a node of $\mathcal{T}$, and $\tau$ a subtree. We define three transformations:
\begin{enumerate}
    \item pruning of siblings: $v \xrightarrow{r} v'$ creates a tree with all siblings of $v' \in Ch(v)$ removed, i.e., $\langle v \mid \mathcal{T}[c], c \in Ch(v) \rangle \leadsto \{ \langle v \mid \tau_{v'} \rangle \}$;
    \item substitution: $\tau \xrightarrow{s} \tau'$ creates a tree with $\tau$ replaced by $\tau'$ on the same tasks, i.e., $\tau \leadsto \{\tau'\}$;
    \item parallelization: $\langle v \mid \tau_1, \dots, \tau_N \rangle \leadsto \{\tau_1,\ldots,\tau_N\}$ creates $N$ trees where the subtree rooted at $v$ is replaced by its subtrees $\tau_1, \dots, \tau_N$.
\end{enumerate}

We now proceed to determining a set of sufficient conditions for pruning, substitution and parallelization.

\subsection{Conditions for Pruning}
\paragraph{Disjunction $\vee$} Note that the capability excess at a node in the syntax tree corresponding to disjunction is non-negative if and only if at least one if its children has a non-negative capability excess.
In other words, satisfaction of the exclusive disjunction $\dot{\vee}$ is sufficient for the satisfaction of disjunction $\vee$.   This is encapsulated in the following proposition.

\begin{prop}
\label{disjProp}
Let $v=\lor$ be a disjunction node in a syntax tree $\mathcal{T}$,  $v' \in Ch(v)$, and $\alpha$ an assignment such that $ce(\alpha, v') \geq 0$.
For any assignment $\alpha$, $\alpha' \models_e \mathcal{T}' \Rightarrow \alpha \models_e \mathcal{T}$, where $\mathcal{T}'$ is obtained by applying the pruning rule $v \xrightarrow{r} v'$, and $\alpha'$ is the induced assignment from $\alpha$.
\end{prop}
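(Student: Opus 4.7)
The plan is to show that the capability excess values at corresponding nodes of $\mathcal{T}$ and $\mathcal{T}'$ agree componentwise, and then invoke the eligibility hypothesis at the root. First, I would observe that the transformation $v \xrightarrow{r} v'$ modifies $\mathcal{T}$ only by removing the sibling subtrees of $v'$ at the disjunction $v$; the subtree rooted at $v'$ and the portion of the tree strictly outside the subtree rooted at $v$ are unchanged in $\mathcal{T}'$. By the induced-assignment clause of the transformation definition, $\alpha'(\lambda) = \alpha(\lambda)$ for every leaf $\lambda$ of $\mathcal{T}'$, so a leaf-to-root induction using~\eqref{eq:ceRec} on the common subtree rooted at $v'$ yields $ce(\alpha', w) = ce(\alpha, w)$ for every node $w$ in that subtree, and in particular $ce(\alpha', v') = ce(\alpha, v')$.

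Next I would treat the distinguished disjunction node $v$. In $\mathcal{T}'$, $v$ has the single child $v'$, so the disjunction case of~\eqref{eq:ceRec} forces $ce(\alpha', v) = ce(\alpha', v') = ce(\alpha, v')$. In $\mathcal{T}$, the same case gives $ce(\alpha, v) = ce(\alpha, \beta(Ch(v)))$; the hypothesis $ce(\alpha, v') \ge 0$ asserts that $v'$ is a valid witness child of the disjunction, and since $\beta$ is instantiated consistently with the assignment $\alpha$---via the $Choose$ map of Sec.~\ref{sec:sat}---we may take $\beta(Ch(v)) = v'$. Hence $ce(\alpha, v) = ce(\alpha, v') = ce(\alpha', v)$.

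Finally, I would propagate this equality upward along the path from $v$ to the root $v_0$. At each strict ancestor $w$ of $v$, the recursion computes $ce(\alpha, w)$ from the $ce$-values of $w$'s children, via either a componentwise minimum (conjunction and temporal operators) or a $\beta$-selection (disjunction). Every non-path child of $w$ roots a subtree lying entirely in the common portion of $\mathcal{T}$ and $\mathcal{T}'$, so its $ce$-value is identical in both trees, while the path child has matching $ce$ by the inductive step. Hence $ce(\alpha, v_0) = ce(\alpha', v_0)$, and the eligibility assumption $\min_c ce(\alpha', v_0)_c \ge 0$ of Def.~\ref{def:elg} transfers directly to $\alpha \models_e \mathcal{T}$. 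The main subtlety I anticipate is the behaviour of the deterministic selector $\beta$ at $v$ in $\mathcal{T}$: one has to argue that $\beta$ is not fixed adversarially so as to ignore the witness $v'$. This is where appealing to the SMT-induced $Choose$ instantiation---which is coupled to $\alpha$ and selects a witness child whenever one exists---is essential; with that fix in place, the remainder of the argument is a routine induction on $ce$.
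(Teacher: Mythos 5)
Your proof is correct and takes essentially the same route as the paper's: show that the surviving child $v'$ gives $v$ a non-negative capability excess (via the $\beta$/$Choose$ selection), observe that the rest of $\mathcal{T}$ coincides with $\mathcal{T}'$, and propagate non-negativity to the root. Your version is considerably more explicit than the paper's two-sentence argument --- in particular the leaf-to-root induction on~\eqref{eq:ceRec} and the discussion of how $\beta$ must be instantiated are spelled out rather than left implicit --- but no new idea is introduced.
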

\begin{proof}
If $\alpha' \models_e \mathcal{T}',$ then at least one child of $v$ has non-negative capability excess and, thus, $v$ has non-negative capability excess.  Further, since the rest of $\mathcal{T}$ is identical to $\mathcal{T}'$, non-negative excess of $v$ and eligibility of $\alpha'$ implies eligibility of $\alpha$.
\end{proof}

\subsection{Conditions for Substitution}

\paragraph{Until $\until{a,b}$}  The formula $\phi_1 \until{a,b} \phi_2$ is in general difficult to parallelize.  However, we can substitute the until operator with the more conservative formula $\alw{0,b} \phi_1 \land  \even{a,b} \phi_2$ that is amenable to parallelization.

\begin{prop}
\label{untProp}
Let $v=\until{a,b}$ be an until node in a syntax tree $\mathcal{T}$.
If we apply the substitution
\begin{equation}\label{eq:subunt}
\langle \until{a,b} \mid \phi_1, \phi_2 \rangle \xrightarrow{s} \langle \land \mid \langle \alw{0,b} \mid \phi_1 \rangle,  \langle \even{a,b} \mid \phi_2\rangle \rangle,
\end{equation}
then $\alpha' \models_e \mathcal{T}' \Rightarrow \alpha \models_e \mathcal{T}$,
where $\mathcal{T}'$ is obtained from $\mathcal{T}$ by applying the substitution rule, and $\alpha'$ is the induced assignment from $\alpha$.
\end{prop}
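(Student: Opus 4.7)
The plan is to argue that the substitution rule~\eqref{eq:subunt} leaves the capability excess numerically unchanged at the affected node, so eligibility at the root of $\mathcal{T}'$ is equivalent to eligibility at the root of $\mathcal{T}$. This is stronger than what is claimed, but it immediately yields the implication.

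First, I would note that because the substitution rule is a transformation, it cannot introduce new tasks, and the induced assignment $\alpha'$ agrees with $\alpha$ on every leaf. Hence for any subtree of $\mathcal{T}$ that also appears unchanged in $\mathcal{T}'$, the bottom-up recursion~\eqref{eq:ceRec} produces identical capability-excess vectors under $\alpha$ and $\alpha'$. In particular, $ce(\alpha', \phi_1) = ce(\alpha, \phi_1)$ and $ce(\alpha', \phi_2) = ce(\alpha, \phi_2)$, since the subtrees rooted at $\phi_1$ and $\phi_2$ appear in both trees with identical leaf assignments.

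Second, I would compute the capability excess at the root of the substituted subtree using the ``otherwise'' case of~\eqref{eq:ceRec}. The unary temporal operators $\alw{0,b}$ and $\even{a,b}$ each have a single child, so
\[
ce(\alpha', \alw{0,b}\phi_1) = ce(\alpha', \phi_1), \qquad ce(\alpha', \even{a,b}\phi_2) = ce(\alpha', \phi_2),
\]
and the new conjunction node then satisfies
\[
ce(\alpha', \land)_c = \min\bigl(ce(\alpha', \phi_1)_c, ce(\alpha', \phi_2)_c\bigr) \quad \forall c \in Cap.
\]
On the other hand, the original until node, also falling under the ``otherwise'' case, yields $ce(\alpha, \until{a,b})_c = \min(ce(\alpha, \phi_1)_c, ce(\alpha, \phi_2)_c)$. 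Combining with the preservation of $ce$ on $\phi_1, \phi_2$ gives $ce(\alpha', \land) = ce(\alpha, \until{a,b})$ componentwise.

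Third, since the portion of the tree above the substitution site is identical in $\mathcal{T}$ and $\mathcal{T}'$ and the leaf assignments outside the subtree agree, the same recursive rule~\eqref{eq:ceRec} propagates the (equal) values up to the two roots, giving $ce(\alpha', v_0') = ce(\alpha, v_0)$. Therefore $\min_c ce(\alpha', v_0')_c \ge 0$ implies $\min_c ce(\alpha, v_0)_c \ge 0$, i.e., $\alpha' \models_e \mathcal{T}' \Rightarrow \alpha \models_e \mathcal{T}$. The only subtlety worth flagging is confirming that the unary temporal operators really do act as the identity on $ce$; this is immediate from the ``otherwise'' branch of~\eqref{eq:ceRec} applied to a single child, so there is no genuine obstacle, and the semantic conservativeness of replacing until with $\alw{0,b}\phi_1 \wedge \even{a,b}\phi_2$ is used only implicitly through this equality of capability-excess vectors.
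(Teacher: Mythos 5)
Your proof is correct and rests on the same observation as the paper's: the recursion \eqref{eq:ceRec} assigns both the original $\until{a,b}$ node and the substituted conjunction of unary temporal operators the componentwise minimum of $ce(\alpha,\phi_1)$ and $ce(\alpha,\phi_2)$, so non-negativity (eligibility) transfers from $\mathcal{T}'$ to $\mathcal{T}$. Your version is somewhat more explicit --- and in fact slightly stronger, since you establish equality of the capability-excess vectors at the two roots and hence equivalence of eligibility, whereas the paper only argues the one-directional implication and glosses over the propagation from the substitution site up to the root --- but it is essentially the paper's argument.
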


\begin{proof}
If $\alpha'\models_e \mathcal{T}'$, then both $ce(\alpha,\phi_1)\geq 0$ and $ce(\alpha,\phi_2)\geq 0$, according to the recursive relationship in~\eqref{eq:ceRec}.
Therefore, since the capability excess of $\until{a,b}$ is the minimum of the capability excess of its children ($\phi_1$ and $\phi_2$), the capability excess of the original formula is non-negative.
This implies $\alpha\models_e\mathcal{T}$.
\end{proof}

\paragraph{Conjunction $\wedge$ with upstream temporal operators}  Temporal operators followed by a conjunction may be parallelized if the assignments of children in the conjunction are disjoint.

\begin{prop}
\label{tempProp}
Let $v=\wedge$ be a node in a syntax tree $\mathcal{T}$, and let 
$v_1\ldots v_n$ be the path such that $v_n = Par(v)$, $v_{k-1} = Par(v_{k}), \forall k =2 \ldots n$, and $v_k \in \{\alw{a,b},\even{a,b}\}$.
With slight abuse of notation, if we make the substitution
\begin{equation}\label{eq:suband}
    \begin{aligned}
        \langle v_1\ldots v_n v \mid {}& \mathcal{T}[c], c \in Ch(v) \rangle\\
        &\xrightarrow{s} \langle v \mid \langle v'_1 \ldots v'_n \mid \mathcal{T}[c]\rangle, c \in Ch(v) \rangle
    \end{aligned}
\end{equation}
where $v'_k$ corresponds to $\square$ with the same time bounds as $v_k$, then 
$\alpha'\models_e\mathcal{T}' \Rightarrow \alpha\models_e \mathcal{T}$. Again, $\mathcal{T}'$ is obtained from $\mathcal{T}$ by applying the substitution rule, and $\alpha'$ is the induced assignment from $\alpha$.
\end{prop}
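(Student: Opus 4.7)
The strategy is to show that the capability excess at the root of the modified subtree is identical in $\mathcal{T}$ and $\mathcal{T}'$, so that eligibility propagates upward unchanged; the portions of the two trees above the chain $v_1, \ldots, v_n$ are structurally identical, so this suffices.

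First, I would record a bookkeeping lemma: since the transformation preserves the assignment on leaves ($\alpha'(\lambda) = \alpha(\lambda)$ for all $\lambda \in \Lambda'$), the recursive relation $\alpha(v) = \bigcup_{v' \in Ch(v)} \alpha(v')$ implies that for every node in each preserved subtree $\mathcal{T}[c]$ (for $c \in Ch(v)$), the assigned agent set, and hence the capability excess, is the same in both $\mathcal{T}$ and $\mathcal{T}'$. In particular, $ce(\alpha', c) = ce(\alpha, c)$ at the root of every $\mathcal{T}[c]$.

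Next, I would use the fact that every $\alw{a,b}$ and $\even{a,b}$ is a unary node, so the ``otherwise'' branch of~\eqref{eq:ceRec} collapses to a simple pass-through: $ce(\alpha, v_k) = ce(\alpha, v_{k+1})$ for $k=1,\ldots,n-1$, and $ce(\alpha, v_n) = ce(\alpha, v)$. Hence in $\mathcal{T}$,
\begin{equation*}
    ce(\alpha, v_1) = ce(\alpha, v) = \bigl[\min_{c \in Ch(v)} ce(\alpha, c)_c'\bigr]_{c' \in Cap}.
\end{equation*}
Analogously, in $\mathcal{T}'$, for each new child $w_c = \langle v'_1 \ldots v'_n \mid \mathcal{T}[c]\rangle$ of $v$, the pass-through through the chain of unary temporal operators $v'_1, \ldots, v'_n$ yields $ce(\alpha', w_c) = ce(\alpha', c) = ce(\alpha, c)$, using the bookkeeping step. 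Taking the componentwise min over $c \in Ch(v)$, I obtain $ce(\alpha', v) = ce(\alpha, v_1)$.

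Finally, since the tree above $v_1$ in $\mathcal{T}$ is identical to the tree above $v$ in $\mathcal{T}'$, and the capability excess at the root of the altered segment agrees in the two trees, the ce propagates to the respective roots $v_0$ identically. Therefore $ce(\alpha', v_0) = ce(\alpha, v_0)$, and $\alpha' \models_e \mathcal{T}'$ immediately gives $\alpha \models_e \mathcal{T}$. The main subtlety, which I would state carefully, is that the substitution of $\even{a,b}$ by $\alw{0, \cdot}$-style operators matters for satisfiability but is transparent to $ce$ because both operators are unary and fall into the same ``otherwise'' recursion; eligibility is thus not affected by the choice among unary temporal operators in the chain.
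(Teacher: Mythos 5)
Your proof is correct and follows essentially the same route as the paper's: both arguments rest on the observation that unary temporal operators fall into the pass-through branch of~\eqref{eq:ceRec}, so the substitution leaves the capability excess at $v$ (and hence at the root) unchanged. Your version is in fact a more careful rendering --- the paper jumps directly from $\alpha'\models_e\mathcal{T}'$ to $ce(\alpha',v)\geq 0$ and back, whereas your bookkeeping step showing $ce(\alpha',v_0)=ce(\alpha,v_0)$ closes that gap cleanly.
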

\begin{proof}
If $\alpha'\models_e\mathcal{T}'$, then $ce(\alpha',v) \geq 0$. Further, $ce(\alpha',v) = \min_{c\in Ch(v)} ce(\alpha',c)$.
Therefore, $\min_{c\in Ch(v)} ce(\alpha',c)\geq 0$.
Since temporal operators $v_1,\ldots,v_N$ do not modify capability excess according to~\eqref{eq:ceRec}, we have $ce(\alpha,v)=\min_{c\in Ch(v)} ce(\alpha,c)\geq 0$.
This implies $\alpha\models_e\mathcal{T}$.
\end{proof}

\subsection{Conditions for Parallelization}

\paragraph{Conjunction  $\wedge$ at the root} Conjunctions at the root of a syntax tree can be parallelized if the assignments of their children do not overlap.

\begin{prop}
\label{conjProp}
Let $v_0$ be the root in a syntax tree such that $v_0=\wedge$.
Let $\alpha(v')\cap \alpha(v'') =\emptyset$ for all $v',v''\in Ch(v_0)$.
If we parallelize
\begin{equation}\label{eq:parallelize}
    \langle v_0 \mid \tau_1,\ldots,\tau_N \rangle\leadsto\lbrace \tau_1,\ldots,\tau_N\rbrace
\end{equation}
then $(\bigwedge_{i=1}^N \alpha_i \models_e \tau_i) \Rightarrow \alpha \models_e \mathcal{T}$, where $\alpha_i$ is the induced assignment on subtree $\tau_i$ from $\alpha$.
\end{prop}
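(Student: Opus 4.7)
The plan is to show that capability excess is computed entirely bottom-up from leaf-level agent assignments, so the parallelization, which preserves those leaf-level assignments verbatim in each subtree, also preserves the $ce$ vector at every node. Eligibility of each subtree then lifts to eligibility at the conjunction root via the $\min$ rule for $\wedge$ in~\eqref{eq:ceRec}.

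First I would invoke the Transformation definition: for each $i$, the induced assignment satisfies $\alpha_i(\lambda)=\alpha(\lambda)$ for every leaf $\lambda \in \Lambda'_i$, and, by the propagation rule, $\alpha_i(v') = \bigcup_{v'' \in Ch(v')} \alpha_i(v'')$ at internal nodes. Then I would argue by structural induction on $\tau_i$ that $ce(\alpha_i, v') = ce(\alpha, v')$ for every node $v'$ of $\tau_i$. The base case is immediate: for a leaf task $T$, the quantity $na_c$ in~\eqref{eq:ceRec} depends only on $\alpha(v')$, which matches $\alpha_i(v')$. The inductive step follows because the remaining cases of~\eqref{eq:ceRec} depend only on the children's $ce$ values (and, for $\vee$, on the deterministic selection map $\beta$), so equality at the children propagates upward unchanged.

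Next I would specialize this equality to the root $v'_i$ of each subtree $\tau_i$, which is exactly a child of $v_0$ in $\mathcal{T}$: $ce(\alpha, v'_i) = ce(\alpha_i, \tau_i)$. Assuming $\alpha_i \models_e \tau_i$ for every $i$, Definition~\ref{def:elg} yields $\min_{c \in Cap} ce(\alpha_i, \tau_i)_c \geq 0$, hence $\min_{c \in Cap} ce(\alpha, v'_i)_c \geq 0$ for every $v'_i \in Ch(v_0)$. Finally, applying the conjunction case of~\eqref{eq:ceRec} at $v_0$ gives
\begin{equation*}
ce(\alpha, v_0)_c = \min_{v' \in Ch(v_0)} ce(\alpha, v')_c \geq 0 \quad \forall c \in Cap,
\end{equation*}
which yields $\alpha \models_e \mathcal{T}$.

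The main subtlety, and the point where I would be most careful, is the role of the disjointness hypothesis $\alpha(v') \cap \alpha(v'') = \emptyset$ for distinct children of $v_0$. The eligibility implication above does not invoke it directly; rather, disjointness is what makes the parallelization \emph{well-posed} as a decomposition in the sense of Problem~\ref{prob:decomposition}, since it guarantees that the induced subteams $\{J_r\}_{r \in R}$ obtained from $\alpha_1, \dots, \alpha_N$ form a genuine partition of the agent set and can therefore execute $\tau_1, \ldots, \tau_N$ independently via parallel calls to $\mathrm{Synth}$. I would state this explicitly at the end so that the proposition connects correctly to Algorithm~\ref{alg:overview}.
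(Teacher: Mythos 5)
Your proof is correct and follows essentially the same route as the paper's: eligibility of each $\tau_i$ gives non-negative capability excess at each child of $v_0$, and the $\min$ rule for $\wedge$ in~\eqref{eq:ceRec} lifts this to the root. You merely make explicit two things the paper glosses over — the structural induction showing the induced assignments preserve $ce$ on each subtree, and the observation that the disjointness hypothesis is needed only for the subteams to form a partition, not for the eligibility implication itself — both of which are accurate.
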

\begin{proof}
For each subtree $\tau_i$, $\alpha_i\models_e\tau_i$ implies that capability excess is non-negative (i.e., $ce(\alpha_i,\tau_i)\geq 0$). Since the capability excess of conjunction is simply the minimum of its children, then $ce(\alpha,v_0) = \min_{i}ce(\alpha_i,\tau_i)$. For all $i$, this capability excess is non-negative. Therefore, $ce(\alpha,v_0)\geq 0$, implying $\alpha\models_e\mathcal{T}$.
\end{proof}

\begin{example}
Fig.~\ref{fig:decomp} demonstrates the application of Propositions \ref{disjProp}-\ref{conjProp}.
\end{example}

\begin{figure}
\begin{subfigure}[b]{0.75\columnwidth}
\centering 
\resizebox{\linewidth}{!}{
\begin{tikzpicture}[scale=.6,level 1/.style = {sibling distance = 20em, 
    draw, align=center},
    level 2/.style={sibling distance=10em, 
    draw, align=center, level distance=2.2cm},
     level 3/.style = {sibling distance = 10em,  
    draw, align=center, level distance=2.2cm}]
    \node [shape=circle, fill=cyan,label={\small$\{A_1,A_2,A_3,A_6,\ldots, A_{10}\}$}]{$\wedge$}
            child { node [shape=circle, fill=green,label={\small$\{A_1,A_2,A_3\}$}] {$T_1$} }
        child { node [shape=circle, fill=cyan,label={\small$\{A_6,\ldots,A_{10}\}$}] {$\until{a,b}$}
            child { node [shape=circle, fill=green,label={$\small\{A_6\}$}] {$T_3$ } } 
            child { node [shape=circle, fill=cyan,label={\small$\{A_7,\ldots,A_{10}\}$}]  {$\wedge$} 
                child { node [shape=circle, fill=green,label={\small$\{A_7,A_8\}$}] {$T_4$}}
                child { node [shape=circle,fill=green,label={\small$\{A_9$, $A_{10}\}$}] {$T_5$}}}};
              
\end{tikzpicture}}
\caption{\label{disjApp} Application of Proposition~\ref{disjProp}.}
\end{subfigure}

\begin{subfigure}[b]{0.75\columnwidth}
\centering 
\resizebox{\linewidth}{!}{
\begin{tikzpicture}[scale=.6,level 1/.style = {sibling distance = 15em, 
    draw, align=center, level distance=2.5cm},
    level 2/.style={sibling distance=10em, 
    draw, align=center, level distance=2.5cm},
     level 3/.style = {sibling distance = 10em,  
    draw, align=center, level distance=2.2cm}]
    \node [shape=circle, fill=cyan,label={\small$\{A_1,A_2,A_3,A_6,\ldots, A_{10}\}$}]{$\wedge$}
        child { node [shape=circle, fill=green,label={\small$\{A_1,A_2,A_3\}$}] {$T_1$} }
        child { node [shape=circle, fill=cyan,label={\small$\{A_6\}$}] {$\alw{0,b}$}
            child { node [shape=circle, fill=green,label={$\small\{A_6\}$}] {$T_3$ } }}
        child { node [shape=circle, fill=cyan,label={\small$\{A_7,\ldots,A_{10}\}$}] {$\even{a,b}$}
            child { node [shape=circle, fill=cyan,label={\small$\{A_7,\ldots,A_{10}\}$}]  {$\wedge$} 
                child { node [shape=circle, fill=green,label={\small$\{A_7,A_8\}$}] {$T_4$}}
                child { node [shape=circle,fill=green,label={\small$\{A_9$, $A_{10}\}$}] {$T_5$}}}};
              
\end{tikzpicture}}
\caption{\label{untApp} Application of Proposition~\ref{untProp}.}
\end{subfigure}

\begin{subfigure}[b]{0.75\columnwidth}
\centering 
\resizebox{\linewidth}{!}{
\begin{tikzpicture}[scale=.6,level 1/.style = {sibling distance = 10em, 
    draw, align=center, level distance=2.5cm},
    level 2/.style={sibling distance=10em, 
    draw, align=center, level distance=2.5cm}]
    \node [shape=circle, fill=cyan,label={\small$\{A_1,A_2,A_3,A_6,\ldots, A_{10}\}$}]{$\wedge$}
        child { node [shape=circle, fill=green,label={\small$\{A_1,A_2,A_3\}$}] {$T_1$} }
        child { node [shape=circle, fill=cyan,label={\small$\{A_6\}$}] {$\alw{0,b}$}
            child { node [shape=circle, fill=green,label={$\small\{A_6\}$}] {$T_3$ } }}
        child { node [shape=circle, fill=cyan,label={\small$\{A_7$, $A_{10}\}$}] {$\alw{a,b}$}
                child { node [shape=circle, fill=green,label={\small$\{A_7,A_8\}$}] {$T_4$}}}
        child { node [shape=circle, fill=cyan,label={\small$\{A_9$, $A_{10}\}$}] {$\alw{a,b}$}
                child { node [shape=circle,fill=green,label={\small$\{A_9$, $A_{10}\}$}] {$T_5$}}};
              
\end{tikzpicture}}
\caption{\label{tempApp} Application of Proposition~\ref{tempProp}.}
\end{subfigure}

\begin{subfigure}[b]{0.75\columnwidth}
\centering 
\resizebox{\linewidth}{!}{
\begin{tikzpicture}[scale=1.2,level 1/.style = {sibling distance = 15em, 
    draw, align=center, level distance=2.4cm},
    level 2/.style={sibling distance=10em, 
    draw, align=center, level distance=2.4cm},
     level 3/.style = {sibling distance = 10em,  
    draw, align=center, level distance=2.4cm}]
        \node [shape=circle, fill=green,label={\small$\{A_1,A_2,A_3\}$}] {$T_1$};
            \end{tikzpicture}
            \begin{tikzpicture}
             \node [shape=circle, fill=cyan,label={$\small\{A_6\}$}] {$\alw{0,b}$}
            child { node [shape=circle, fill=green,label={$\small\{A_6\}$}] {$T_3$ } } ;
            \end{tikzpicture}
            \begin{tikzpicture}
            \node [shape=circle, fill=cyan,label={\small$\{A_7,A_8\}$}] {$\alw{a,b}$}
                child { node [shape=circle, fill=green,label={\small$\{A_7,A_8\}$}] {$T_4$}};
                            \end{tikzpicture}
            \begin{tikzpicture}
            \node [shape=circle, fill=cyan,label={\small$\{A_9,A_{10}\}$}] {$\alw{a,b}$}
                child { node [shape=circle,fill=green,label={\small$\{A_9$, $A_{10}\}$}] {$T_5$}};
\end{tikzpicture}}
\caption{ \label{conjApp} Application of Proposition \ref{conjProp}.}
\end{subfigure}
\caption{\label{fig:decomp} Decomposition of abstract syntax tree and assignments by applying Propositions \ref{disjProp}-\ref{conjProp}.}
\end{figure}
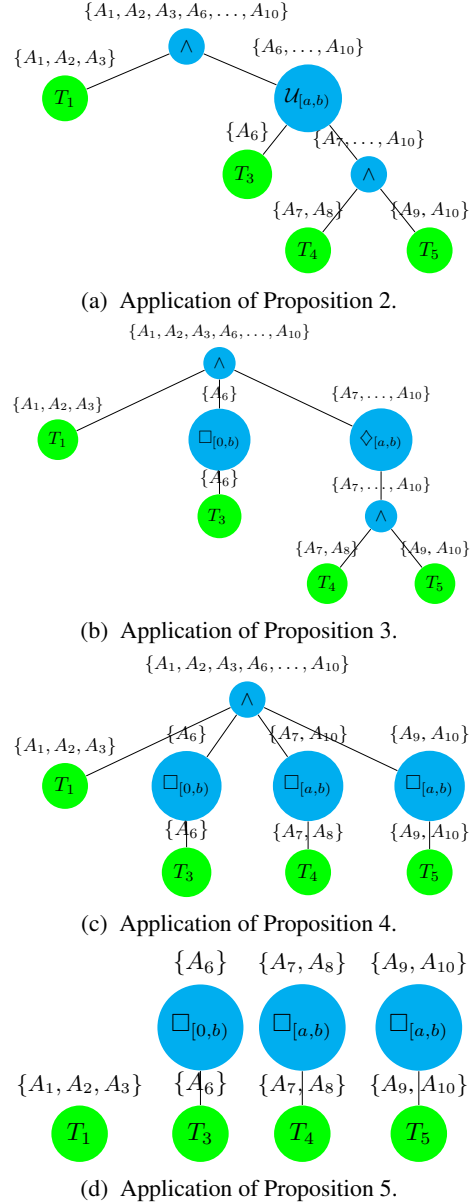

\section{Agent Assignments as Satisfiability Modulo Theory (SMT)}
\label{sec:sat}
Now, we consider the problem of finding an assignment $\alpha$ for a given CaTL formula $\phi$ such that applying Propositions \ref{disjProp}-\ref{tempProp} will result in a set of subformulas and subteams that can be decomposed. 
We consider the problems of determining eligibility, pruning, substituting, and parallelizing. 
Eligibility variables capture whether an assignment is eligible according to Definition~\ref{def:elg}. 
Independence variables keep track of any overlap between assignments of agents to tasks, thereby helping determine the feasibility of substitution and parallelization. 
In the definitions below, these Boolean variables are indicated by the terms $Elg$ and $Ind$, respectively.

\subsection{CaTL Formula Encoding}\label{sec:smt_encode}

\paragraph{Task Assignment Encoding} We first examine 
how an assignment $\alpha$ determines elgibility and independence of tasks at the leaves of the tree.
Consider a set of agents $\{A_j\}_{j \in J}$ with corresponding classes of capabilities $g_j$  and tasks $T_m = (d_m,\pi_m,cp_m)$. 

\begin{itemize}
    \item For each task, the assignment is eligible if the capability excess is non-negative, i.e.,
        $Elg(\alpha,v) := ce(\alpha,v)\geq 0$.
   \item Because tasks are leaves of the syntax tree, they do not inherently require coordination with other tasks. Thus we set independence to be true, i.e.,
        $Ind(\alpha,v) := \mathtt{True}$.
   \end{itemize}

Now, we ascend from the leaves of the tree to the root.  We describe what to do once we encounter each operator.

\paragraph{Disjunction $\vee$}
As previously noted, an assignment for disjunction is eligible if at least one of its children has an eligible assignment. We let the solver select which child is chosen and track the eligibility and independence for that child.
\begin{itemize}
    \item The eligibility of a disjunction node is inherited from the chosen child node, i.e., 
        $Elg(\alpha,v) := Elg(\alpha,Choose(\alpha,v))$,
    where $Choose(\alpha,v)$ is a function that selects a child of $v$ with positive eligibility. It plays the role of $\beta$ as defined in Sec.~\ref{sec:ast}.
    \item Independence is inherited from the chosen child node, i.e.,
        $Ind(\alpha,v):= Ind(\alpha,Choose(\alpha,v))$.
\end{itemize}

\paragraph{Temporal operators $\even{t_1,t_2}$ or $\alw{t_1,t_2}$}
Because these temporal operators have only a single child, their encoding is straightforward.

\begin{itemize}
    \item Eligibility is inherited from the child node, i.e.,
        $Elg(\alpha,v) := Elg(\alpha,Ch(v))$.
    \item Independence of tasks is inherited from the child node, i.e.,
        $Ind(\alpha,v):= Ind(\alpha,Ch(v))$.
\end{itemize}

\paragraph{Until $\until{t_1,t_2}$ or Conjunction $\wedge$}
For until and conjunction, we must track eligibility of the children of each node and whether its children are independent.
\begin{itemize}
    \item Eligibility is determined by the conjunction of the children of $v$, i.e.,
        $Elg(\alpha,v) := \bigwedge_{v'\in Ch(v)} Elg(\alpha,v')$.
\item Independence is determined based on the intersection of assignments of child nodes, i.e.,
        $Ind(\alpha,v):= \bigcap_{v'\in Ch(v)}\alpha(v')=\emptyset$.
\item For conjunction at the root, we allow non-binary conjunction. Therefore, we need to consider a slightly different encoding of $Ind$. With slight abuse of notation, we check for pairwise independence between subtrees at the root, i.e.,
        $Ind(\alpha,v_0,v',v'') := (\alpha(v')\cap\alpha(v'')=\emptyset)$
    pairwise for all $v', v''\in Ch(v_0)$, $v' \neq v''$.    
\end{itemize}

\paragraph{Top level encoding}
We encode the total formula as 
\begin{equation}\label{eq:top}
    \begin{aligned}
        \sum_{v \in \mathcal{T}} Cost(Ind(\alpha, v)))\\
        \text{s.t. } Elg(\alpha,v_0)
    \end{aligned}
\end{equation}
where $Cost(\cdot)$ is a cost function that is negative for any values of $Ind$ that are true, and increases in magnitude towards the top of the tree. 
Checking for root eligibility ensures an eligible assignment in all downstream nodes. Finally, minimizing the independence costs searches for a maximally parallelizable assignment.

\subsection{Decomposing Specification using Assignment and SMT}
Given an assignment from the preceding SMT problem, we wish to decompose the tree as much as possible, returning a set of subformulas and subteams. Here we describe how to obtain these formulas and teams from the SMT solution.

The process is outline in Algorithm~\ref{alg:decomp}. Briefly, the syntax tree and assignment from the SMT are provided as input. The tree is pruned at nodes labeled $\vee$ for any children not selected in the SMT (lines 1-2). Then, subtrees are substituted at nodes labeled $\until{a,b}$ or $\wedge$ if the assignments to their children are disjoint (lines 3-5). 
Finally, the tree is parallelized according to the independence of children at the root node (lines 6-7).
The resulting formulas and their corresponding team assignments are extracted and returned. These output formulas and teams can then each by solved by a MILP as in~\citet{Jones2019ScRATCHS}.

\begin{remark}
We note that the tree does not necessarily need to be pruned at $\vee$ nodes. The assignment to downstream nodes from $\vee$ will be empty. It may be the case that agents assigned to the eligible branch can accomplish tasks in the ineligible branch. Thus, the MILP has a greater chance at finding solutions at the cost of computation over a larger formula.
\end{remark}

\begin{algorithm}
\KwIn{Assignment $\alpha$ from the SMT Problem,\\Syntax Tree $\mathcal{T}_{\phi}$}
\KwOut{Set of formulas $\{\phi_i\}_{i \in 1,\ldots,N}$,\\  Corresponding team partition $\{J_i\}_{i \in 1,\ldots,N}$}
\For{$v\in V \vert v=\vee$}{
    prune subtree rooted at $v$\;
    }
\For{$v\in V \vert v\in\{\until{a,b},\wedge\}${\bf and } $v\neq v_0$}{
\If{ $Ind(\alpha,v)=\mathtt{True}$}{
    substitute according to~\eqref{eq:subunt} or~\eqref{eq:suband}\;}
    }
\If{$\exists v',v''\in Ch(v_0) \vert Ind(\alpha,v_0,v',v'')=\mathtt{True}$}{
    $\mathcal{T}\leadsto\lbrace \tau_1,\ldots,\tau_N\rbrace$ according to~\eqref{eq:parallelize}\;}
\For{$\tau_i \in \lbrace \tau_1,\ldots,\tau_N\rbrace$}{
    extract formula $\phi_i$ from subtree $\tau_i$\;
    extract subteam $J_i$ from $\alpha_i$\;
}
    
\Return{$\{\phi_i\}_{i \in 1,\ldots,N}$, $\{J_i\}_{i \in 1,\ldots,N}$}
\caption{Decomposition using assignment $\alpha$ from SMT}\label{alg:decomp}
\end{algorithm}

\remark{If the assignment is eligible and feasible but no satisfying parallel execution exists, we must furthermore add that information to the SMT problem so that the solver does not continue to investigate similar solutions that are unlikely to work. This can be accomplished using the irreducible inconsistent set (IIS). The IIS is computed by most modern solvers, and provides constraints that can be used in the SMT. There are several technical issues that need to be addressed in this process, and we leave it as future work.}\label{IIS}

\subsection{Simulation and Results}

To validate our proposed methodology and evaluate its computational performance, we performed computational experiments.
The SMT problem was coded in Z3~\cite{demoura2008}.
Synthesis was performed using the Gurobi solver~\cite{gurobi}.
Experiments were run in Python 2.7 on Ubuntu 16.04 with a 2.5 GHz Intel i7 processor and 16 GB of RAM.

To evaluate our methodology, we tested the system in an environment with 25 states for varying numbers of agents. 
The specification in~\eqref{examSpec} was used for all experiments, and the number of agents required per task are given in Fig.~\ref{expTab}.
Agents were randomly assigned capabilities from the set $\lbrace (c1),(c2),(c1,c2)\rbrace$.
We evaluated the performance for 10, 20, 30, 40, and 50 agents.
The environment was a fully-connected $5\times 5$ grid, with edge weight of $1$ and randomly assigned region labels.
In general, increasing the number of agents decreases the time to find a solution.
Therefore as the number of agents increased, the number of regions to be serviced was also increased, to maintain the same approximate difficulty of finding a solution.
We calculated the time to find the first feasible solution, as well as the time to find the maximally robust solution.
One hundred simulations were run for each condition, and we set timeout to be $120s$.

Run time results are shown in Fig.~\ref{fig:runtime} and Tables~\ref{tab:feasible} and~\ref{tab:robust}. For the time to first feasible solution, and the time to robust solution, the decomposed system executes faster than the centralized system. The run times displayed include both decomposition time and MILP solution time. 
However, we note that the assignment via SMT does not take robustness into account. Therefore, the robust centralized solution consistently returns a more robust solution than the decomposed solver, which often returns a robustness of $0$. For the decomposed solution, robustness was computed as the minimum of all subformula robustness, meaning the worst-case robustness could be $0$, even if some subteams performed better.

\begin{figure}
    \centering
    \includegraphics[width=0.8\columnwidth]{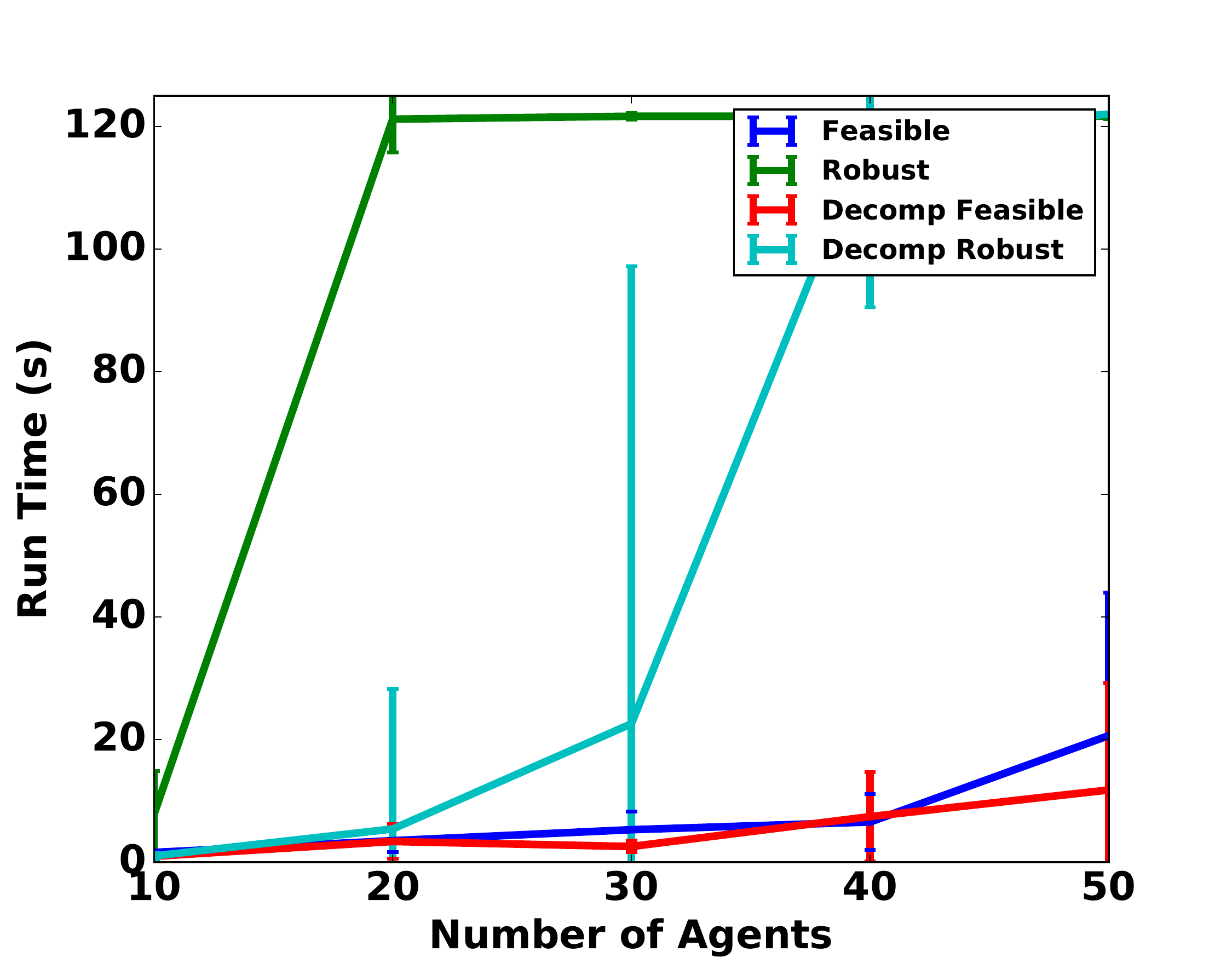}
    \caption{Run time with and without decomposition for varying problem sizes. Simulations performed for first feasible solution and maximally robust solution. 100 random trials were run for each case. Timeout was set at 120s.}\label{fig:runtime}
\end{figure}

\begin{table}
\centering
\caption{Run time for obtaining the first feasible solution. Results are presented as mean / max.}\label{tab:feasible}
\begin{tabular}{ |c|c|c| } 
\hline
 & No Decomposition & Decomposition\\
 \cline{2-3}
 Agents & Time (s) & Time (s) \\
 \hline
 10 & 1.60 / 1.97 &  0.96 / 1.99 \\ 
 20 & 3.51 / 6.50 &  3.38 / 5.94 \\ 
 30 & 5.30 / 11.46 & 2.56 / 4.42 \\ 
 40 & 6.56 / 12.14 & 7.43 / 19.72 \\ 
 50 & 20.65 / 51.25 & 11.78 / 28.13 \\ 
 \hline
\end{tabular}
\end{table}

\begin{table}
\centering
\caption{Run time and robustness for obtaining the optimally robust solution. Results are presented as mean / max. Timeout was set to 120 s.}\label{tab:robust}
\begin{tabular}{ |c|c|c|c|c| } 
\hline
 & \multicolumn{2}{|c|}{No Decomposition} & \multicolumn{2}{|c|}{Decomposition}\\
 \cline{2-5}
 Agents & Time (s) & $\rho$ & Time (s) & $\rho$\\
 \hline
 10 & 7.81 / 25.32 & 0.88 / 2 & 1.02 / 2.54 & 0 / 0 \\ 
 20 & 120 / 120 & 0.79 / 1 & 5.42 / 107.49 & 0 / 0 \\ 
 30 & 120 / 120 & 0.87 / 1 & 22.58 / 120 & 0.01 / 1 \\ 
 40 & 120 / 120 & 0.89 / 1 & 120 / 120 & 0 / 0 \\ 
 50 & 120 / 120 & 0.88 / 1 & 120 / 120 & 0.33 / 1 \\ 
 \hline
\end{tabular}
\end{table}

In Fig.~\ref{fig:decomptime}, we show the proportion of the run time that is used for decomposition versus solving the MILP. For each data point, the run time is dominated by the MILP solver and not the decomposition. This suggests that the decomposition process is efficient. When coupled with the results in Fig.~\ref{fig:runtime}, it suggests that the cost of decomposition is low, while the benefit of decompostion is high.

\begin{figure}
    \centering
    \includegraphics[width=0.8\columnwidth]{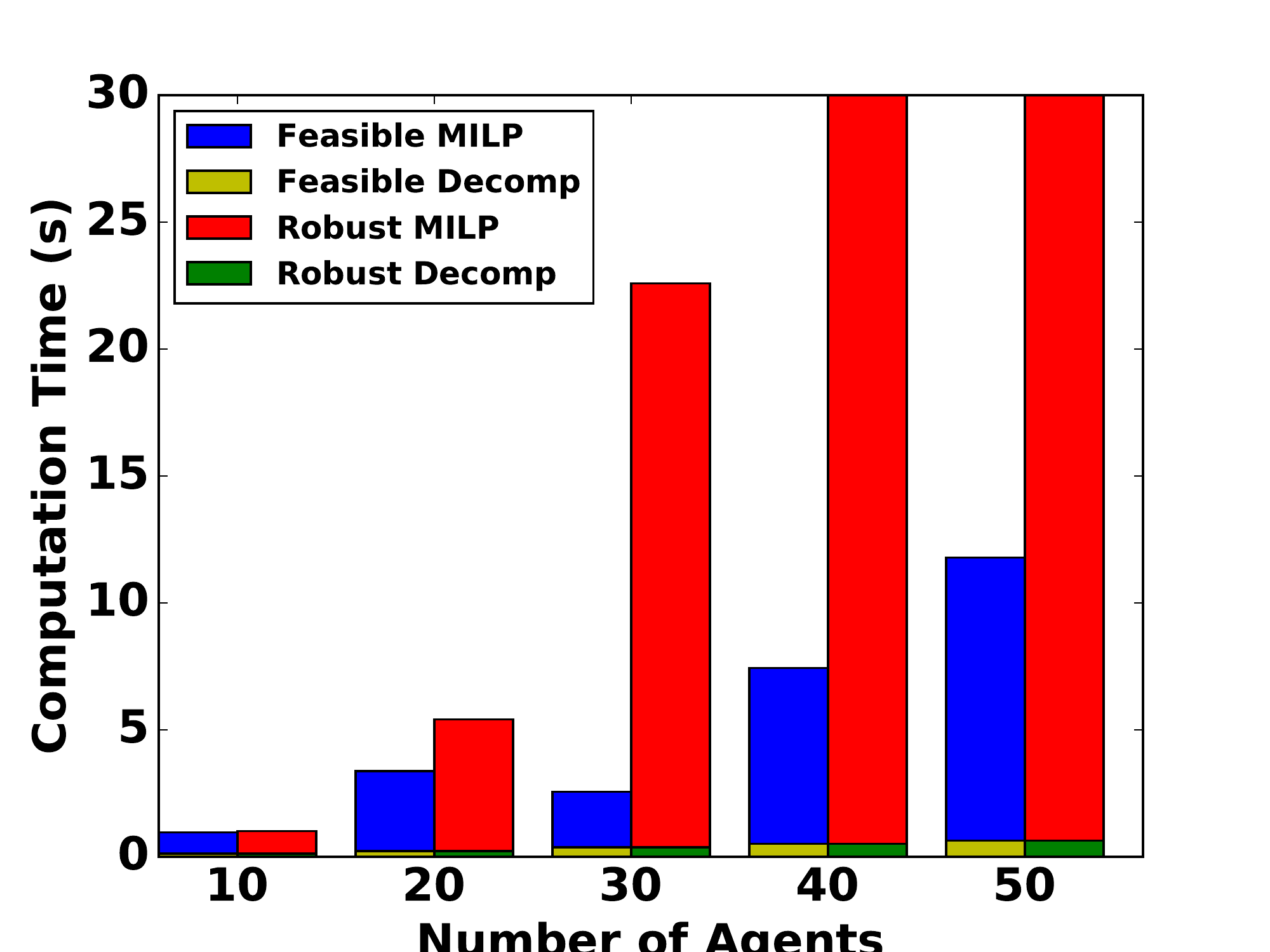}
    \caption{Overall run time broken into decomposition time and MILP time. Results are shown for the feasible and robust MILP solution.}\label{fig:decomptime}
\end{figure}

\subsection{Conclusion}

In this work, we have proposed a method for the automatic decomposition of a team of agents and a formal specification into a set of subteams and sub-specifications. Our method employs SMT to find a feasible decomposition that we then solve in a distributed manner using a set of MILPs. This method significantly reduces the run time over a centralized approach. It represents a promising first step towards speeding up planning for large heterogeneous teams.

There are several avenues of future work. 
First, the robustness of the decomposed solutions is significantly lower than for the centralized solution. 
By using capability excess as part of our cost function, we may be able to find a decomposition that is more robust.
It may also be possible that an assignment meets our criteria but has no feasible solution via the MILP (i.e., agents cannot service their required tasks according to their timed deadlines). 
One possible solution to that problem is to use the IIS (see Remark~\ref{IIS}) in feedback with the SMT problem to remove any infeasible conditions. 
We may also be able to incorporate properties of the environment or agent locations into the assignment problem.

\bibliography{RSSBib2,RSSBib,DecompRefs}

\end{document}